\newtheorem{myTheo}{Theorem}
\newtheorem{myDef}{Definition}
\newtheorem{myTheoApp}{Theorem}
\newcommand*{\addFileDependency}[1]{
  \typeout{(#1)}
  \@addtofilelist{#1}
  \IfFileExists{#1}{}{\typeout{No file #1.}}
}
\title{Anchoring Path for Inductive Relation Prediction in Knowledge Graphs}
\author{
    Zhixiang Su\textsuperscript{\rm 1,2,3},
    Di Wang\textsuperscript{\rm 2,3},
    Chunyan Miao\textsuperscript{\rm 1,2,3,4}
    Lizhen Cui\textsuperscript{\rm 4,5}
}
\begin{document}
\maketitle

\begin{abstract}
      Aiming to accurately predict missing edges representing relations between entities, which are pervasive in real-world Knowledge Graphs (KGs), relation prediction plays a critical role in enhancing the comprehensiveness and utility of KGs. Recent research focuses on path-based methods due to their inductive and explainable properties. However, these methods face a great challenge when lots of reasoning paths do not form Closed Paths (CPs) in the KG. To address this challenge, we propose Anchoring Path Sentence Transformer (APST) by introducing Anchoring Paths (APs) to alleviate the reliance of CPs. Specifically, we develop a search-based description retrieval method to enrich entity descriptions and an assessment mechanism to evaluate the rationality of APs. APST takes both APs and CPs as the inputs of a unified Sentence Transformer architecture, enabling comprehensive predictions and high-quality explanations. We evaluate APST on three public datasets and achieve SOTA performance in 30 of 36 transductive, inductive, and few-shot experimental settings.
\end{abstract}

\section{Introduction}\label{section_introduction}
    As a structured representation of knowledge (e.g., real-world facts), Knowledge Graphs (KGs) serve as the basis for various downstream tasks requiring reasoning, such as question answering. KGs generally consist of a collection of triplets, denoted as $G(E_G, R_G)=\{(h_i,r_i,t_i)|i=1,2,3,..,m\}$, where entities (head and tails, $h_i,t_i \in E_G$) and relations ($r_i \in R_G$) are represented as nodes and edges, respectively. However, real-world KGs are often incomplete, making relation prediction a necessary step to validate missing relations and subsequently enhance the comprehensiveness and reasoning ability of KGs. Given a query triplet $(h,r,t)$ with $h$ (or $t$) masked, relation prediction aims to predict the masked entity, utilizing the incomplete knowledge stored in the KG.

    Relation prediction approaches can generally be classified into three different types: embedding-based, Graph Neural Network (GNN)-based, and path-based methods. Recent research has focused on path-based methods (e.g., KRST~\cite{KRST}) due to their notable advantages as follows:

    \noindent \textbf{Inductiveness:} Unlike traditional approaches that focus on transductive situations (where the entity set is static and all entities are observed during training), real-world dynamic graphs are constantly evolving. This necessitates models to possess inductive capabilities to deal with unseen entities. Most embedding-based and GNN-based methods generate embeddings, which are shown as difficult to represent unseen entities. Comparatively, path-based methods are well suited to handle the inductive cases by making predictions based on a chain of relations between the head and tail entities. Such a chain of relations is usually regarded as the \textbf{supporting evidence}  in this reasoning process. Once the path information is learned, similar chains of relations can be extrapolated for predictions of unseen entities.

    \noindent \textbf{Explainability:} Explainability holds significant importance in relation prediction, encompassing a model's ability to justify its predictive decisions by illustrating the essential reasoning process.  Embedding-based and GNN-based methods have difficulties in this aspect because their encoded information is incomprehensible to humans. Comparatively, path-based methods inherently offer explainability, because the paths themselves can serve as the supporting evidence for predictions with explanations.

    Recent approaches (e.g., BERTRL~\cite{BERTRL} and KRST~\cite{KRST}) combined path-based methods with pre-trained language models (PLMs). PLMs take text descriptions as input, offering valuable auxiliary information for encoding paths and triplets. These text descriptions may significantly enrich the semantic meanings of entities and relations. By leveraging text descriptions, these recent approaches achieved state-of-the-art (SOTA) performance on the most widely benchmarked datasets.

    Despite the great success of path-based methods with PLMs in achieving inductive explainable relation prediction, their performance is still limited. Specifically, path-based methods make predictions by leveraging Closed Paths (CPs), where paths are connected to both query head and query tail entities. Therefore, when the given KGs are highly incomplete, i.e., CPs between many head and tail entities are absent, the performance of the existing approaches may be greatly affected. On the other hand, a large portion of supporting evidence is not explicitly provided in the form of CPs. For example, as shown in Figure~\ref{fig1}, consider the query triplet $(\textit{Morgan},\textit{Profession},\textit{Actor})$ with $\textit{Actor}$ masked, a piece of evidence that aligns with human commonsense knowledge is shown in Path 1 as follows:
        \begin{equation*}
        \small
        \label{AP1}
        \textit{Path 1:\:}
            \textit{Morgan} \xrightarrow[]{\textit{Perform}} \textit{TheShawshankRedemption} \xrightarrow[]{\textit{Genre}^{\textit{-1}}}
                \textit{Drama}
        \end{equation*}
    The fact that $\textit{Morgan}$ $\textit{(Freeman)}$ performs in a $\textit{Drama}$ implies with a high probability that he is an $\textit{Actor}$. This evidence connects to the query head ($\textit{Morgan}$) but does not connect to the tail ($\textit{Actor}$) as shown in Figure~\ref{fig1}. Capable of capturing this kind of evidence will greatly enhance the reasoning ability of path-based methods. However, in the realm of KG, researchers are still exploring systematic approaches to effectively capture such commonsense knowledge.

    \begin{figure}[!t]
        \centering
        \includegraphics[scale=0.7]{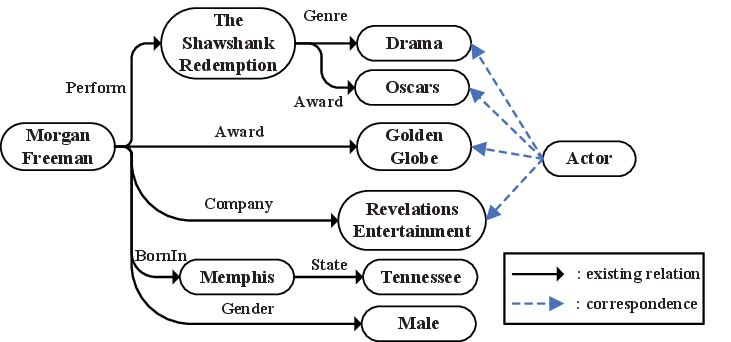}
        \caption{An incomplete KG with paths easily deducible by applying human commonsense knowledge.}\label{fig1}
    \end{figure}

    To further improve the performance of path-based methods with PLMs by incorporating the aforementioned deducible evidence for relation prediction, we propose \textbf{Anchoring Path Sentence Transformer (APST)}, which comprises the following key innovations:

    \noindent \textbf{Enriched text description:} We develop a search-based description enrichment approach, aiming to systematically capture relevant, more detailed descriptions from designated public websites.

    \noindent \textbf{Anchoring Path (AP)  extraction and filtering:} To handle the situations when CPs are absent and alternatively provide multiple pieces of supporting evidence, we define AP to identify reliable paths that connect to only the head or tail entity. APs are easy to obtain, while most of them are logically irrelevant w.r.t the given query triplet. Therefore, we propose AP accuracy and AP recall to assess the rationality of APs and filter out the logically irrelevant ones.  The remaining APs are used for relation prediction, well preserving the inductive and explainable properties of our path-based method.

    \noindent \textbf{Unified Sentence Transformer architecture:} We fine-tune a unified Sentence Transformer to encode inputs comprising both APs and CPs and subsequently make comprehensive relation predictions with intuitive explanations.

    Our key contributions in this work are as follows:

        (\romannumeral1)~We define Anchoring Path (AP) and two AP evaluation metrics, namely AP accuracy and AP recall, to capture and assess supporting evidence other than CPs in KGs.

        (\romannumeral2)~We propose a unified Sentence Transformer named APST, which takes inputs of both APs and CPs with the complement of external textual knowledge captured from public websites.

        (\romannumeral3)~We evaluate the performance of APST on three public datasets. APST achieves SOTA performance in 30 of 36 transductive, inductive, and few-shot experimental settings.

\section{Related Work}\label{section_related_work}
    In this section, we review the three main relation prediction approaches, namely embedding-based, GNN-based, and path-based methods.

    \noindent \textbf{Embedding-based methods} originate from Word2Vec~\cite{word2vec} and represent entities and relations as dense, low-dimensional vectors in a continuous vector space. By associating entities and relations with these embeddings, embedding-based methods measure the distance between entities and relations, facilitating relation predictions based on proximity. Such methods (e.g., TransE~\cite{TransE}, TransH~\cite{TransH}, TransD~\cite{TransD}, TransR~\cite{TransR}, ComplEx~\cite{ComplEx}, ConvE~\cite{ConvE}, and TuckER~\cite{Tucker}) learn embeddings by optimizing certain scoring functions, which are used to capture the relationships between entities. Embedding-based methods offer the advantage of efficient computation and the ability to capture complex relationships between entities in a compact representation. However, they struggle with the generalization to unseen entities and relations. Additionally, these methods are often not explainable because the low-dimensional vectors are not comprehensible to humans.

    \noindent \textbf{GNN-based methods} leverage GNNs to encode the structural information and propagate information through the graph. GNNs operate by aggregating and updating entity representations based on the features of their neighboring entities. By iteratively propagating information through the graph, GNNs can capture complex dependencies and higher-order relationships in the KG. Notable GNN-based models for relation prediction include R-GCN~\cite{RGCN}, CompGCN~\cite{CompGCN}, and GRAIL~\cite{GRAIL}. However, recent studies~\cite{li2022graph,zhang2022rethinking} explored the effective components of GNN-based methods in KGs and found that aggregation actually does not significantly affect the model performance. Consequently, GNN-based explanations relying on neighboring weights may not be always convincing. Moreover, while several GNN-based methods (e.g., GRAIL) are inductive, the majority of GNN-based methods still struggle with inductive cases.

    \noindent \textbf{Path-based methods} leverage CPs in KGs to infer relationships between entities. These methods identify informative paths in CPs connecting two entities and subsequently predict missing relations. Path-based methods often employ path-finding algorithms, such as random walk or breadth-first search (BFS), to explore the KG and identify relevant CPs. Notable path-based methods include DeepPath~\cite{DeepPath}, MINERVA~\cite{MINERVA}, BERTRL~\cite{BERTRL}, and KRST~\cite{KRST}. Compared with the other two types of methods, the distinct advantage of path-based ones is the explicit incorporation of the graph structure with rich information encoded in the paths.

    \noindent \textbf{Methods with PLMs:}
    Leveraging PLMs (e.g., BERT~\cite{BERT}, GPT~\cite{GPT}, and RoBERTa~\cite{Roberta}) for text-related tasks has shown remarkable success in Natural Language Processing (NLP). Recent methods also applied PLMs to enhance the performance of KG-related tasks.
    Such methods (e.g., KG-BERT~\cite{KGBERT}, BERTRL~\cite{BERTRL}, and KRST~\cite{KRST}) utilize the contextualized embeddings generated by PLMs. These embeddings capture the semantics of entities and relations from their text descriptions within sentences or text corpora. By encoding KG information from text descriptions, embeddings can be obtained by feeding triplets or CPs to the PLM and subsequently used as input features for relation prediction.
    Path-based methods with PLMs offer the advantage of capturing both the structural and semantic information. Additionally, the utilization of PLMs improves the generalization ability of path-based methods, allowing them to better comprehend the descriptions of previously unseen entities.

\section{Preliminary}\label{section_preliminary}
   Before introducing APST, we define the inductive relation prediction. Then we introduce Closed Path (CP), which is commonly adopted by path-based methods.

   \begin{myDef}[Inductive Relation Prediction]
    \label{def_inductive_relation_prediction}
    Given a training graph $G_{\textit{train}}(E_{\textit{train}},R_{\textit{train}})$, a testing graph $G_{\textit{test}}(E_{\textit{test}},R_{\textit{test}})$, and a set of query triplets $Q(E_{\textit{query}},R_{\textit{query}})=\{(h_q,r_q,t_q)|h_q, t_q \in E_{\textit{query}}, r_q \in R_{\textit{query}}\}$, a relation prediction is inductive if:
    \begin{itemize}
        \item $E_{\textit{train}} \cap E_{\textit{test}}= \emptyset, E_{\textit{query}} \subseteq E_{\textit{test}}$,
        \item $R_{\textit{test}} \subseteq R_{\textit{train}}, R_{\textit{query}} \subseteq R_{\textit{train}}$.
    \end{itemize}
    \end{myDef}

     Inductive relation prediction requires the model to handle unseen entities. To evaluate the model's inductive ability, we adopt the same settings used in \cite{GRAIL, BERTRL, KRST}. Specifically, during training, only triplets from the training graph $G_{\textit{train}}$ are observed, while the testing graph $G_{\textit{test}}$ is only used to extract supporting evidence for query triplets during testing. Furthermore, to prevent the model from relying on transductive evidence, we set the strictest condition in Definition~\ref{def_inductive_relation_prediction}. Although real-world training and testing graphs may contain overlapping entities, we require $E_{\textit{train}}$ and $E_{\textit{test}}$ have zero overlap to avoid the model relying on transductive evidence.

    Because both testing and query entities are unseen during training, most path-based inductive methods use CPs in KGs for predictions and explanations. For a given query triplet $(h_q, r_q, t_q)$, triplets with the same relation $r_q$ that appear in the training graph can be represented as follows:
    \begin{equation*}
        G_{\textit{train}}(r_q)=\{(h,r_q,t) | (h,r_q,t)\in G_{\textit{train}}\}.
    \end{equation*}
    Among triplets in $G_{\textit{train}}(r_q)$, path-based methods identify paths between heads and tails in the following manner:
    \begin{equation*}
        h \xrightarrow[]{r_1} e_1 \xrightarrow[]{r_2} e_2 \xrightarrow[]{r_3} ... \xrightarrow[]{r_{n-1}} e_{n-1} \xrightarrow[]{r_n} t.
    \end{equation*}
    By disregarding the entities, certain chains of relations commonly appear in the paths. These paths are known as Closed Paths (CPs), which are defined as follows:
    \begin{equation*}
        \textit{CP}(h,r,t): h \xrightarrow[]{r'_1} \xrightarrow[]{r'_2} \xrightarrow[]{r'_3} ... \xrightarrow[]{r'_{n-1}} \xrightarrow[]{r'_n} t.
    \end{equation*}
    Such chains of relations, denoted as $R_{\textit{CP}}(r_q)$ are considered to be logical w.r.t $r_q$, which is defined as follows:
    \begin{equation*}
        R_{\textit{CP}}(r_q)=\{(r'_{1},r'_{2},...,r'_{n})\}.
    \end{equation*}
    Starting from $h_q$, path-based methods traverse the chain of relations in $R_{\textit{CP}}(r_q)$ on the testing graph and consider the ending entities as potential prediction candidates.

    CPs offer crucial connectivity information between heads and tails. However, most existing path-based approaches only rely on CPs for reasoning, which greatly hinders their reasoning ability. In this paper, we propose a more generalized type of path to provide logical supporting evidence w.r.t the query triplets, alleviating the reliance of CPs.

\section{Methodology}\label{section_methodology}
    In this section, we first introduce our key innovation, i.e., Anchoring Path (AP). We then define AP recall and AP accuracy for filtering out logically irrelevant APs. Furthermore, to fully leverage the knowledge provided by APs, we adopt the SOTA Sentence Transformer model and propose a unified architecture to retrieve detailed descriptions, encode paths into sentences, and make predictions.

    \subsection{Anchoring Path}
        \begin{myDef}[Anchoring Path (AP)]
        \label{def_Anchoring_path}
        Given a KG $G=\{(h_i,r_i,t_i)|i=1,2,3,..,m\}$ and a query triplet $(h,r,t)$, the head-Anchoring Path (head-AP) and tail-Anchoring Path (tail-AP) are defined as follows:
        \begin{equation*}
            \textit{AP}_{\textit{head}}(h,r,t): h \xrightarrow[]{r_1} \xrightarrow[]{r_2} ... \xrightarrow[]{r_{n-1}} \xrightarrow[]{r_n} e_n,
        \end{equation*}
        \begin{equation*}
            \textit{AP}_{\textit{tail}}(h,r,t): e'_0 \xrightarrow[]{r'_1}\xrightarrow[]{r'_2} ... \xrightarrow[]{r'_{n-1}} \xrightarrow[]{r'_n} t.
        \end{equation*}
        \end{myDef}
        $\textit{AP}_{\textit{head}}$ ($\textit{AP}_{\textit{tail}}$) focuses on the head (tail) entity and traverse the KG to find paths connecting to it. The afore-introduced Path 1 is a typical head-AP w.r.t triplet $(\textit{Morgan},\textit{Profession},\textit{Actor})$. In Theorems~\ref{theo_cp_to_ap} and~\ref{theo_any_path}, we discuss the relationship among CPs, APs, and any paths in KG (see the proofs of Theorems~\ref{theo_cp_to_ap} and~\ref{theo_any_path} in Appendix~\ref{sec_appendix_proof}).

        \begin{myTheo}
        \label{theo_cp_to_ap}
        Let $S(p)$ denote the set of all $p$ paths. For any triplet $(h,r,t)$,
        $S(\textit{CP}(h,r,t)) = S(\textit{AP}_{\textit{head}}(h,r,t)) \cap S(\textit{AP}_{\textit{tail}}(h,r,t))$.
        \end{myTheo}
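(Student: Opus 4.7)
The plan is to prove the theorem by a direct double-inclusion argument, relying only on the structural definitions of \textit{CP}, $\textit{AP}_{\textit{head}}$, and $\textit{AP}_{\textit{tail}}$ given in the Preliminary and Methodology sections. A path is formalized here as an alternating sequence of entities and relations; what distinguishes the three notions is solely which endpoints are pinned to the query entities $h$ and $t$. Under this view, a CP pins both endpoints, a head-AP pins only the starting endpoint at $h$, and a tail-AP pins only the terminating endpoint at $t$. The theorem is therefore essentially the statement ``pinning both $=$ pinning the head and pinning the tail,'' which I expect to drop out by unpacking the definitions.

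First I would prove the forward inclusion $S(\textit{CP}(h,r,t)) \subseteq S(\textit{AP}_{\textit{head}}(h,r,t)) \cap S(\textit{AP}_{\textit{tail}}(h,r,t))$. Let $p \in S(\textit{CP}(h,r,t))$; by definition, $p$ has the form $h \xrightarrow{r'_1} \cdots \xrightarrow{r'_n} t$. Reading $p$ from the left, its starting entity is $h$ and its length is arbitrary, so $p$ matches the head-AP template $h \xrightarrow{r_1} \cdots \xrightarrow{r_n} e_n$ with $e_n = t$, giving $p \in S(\textit{AP}_{\textit{head}}(h,r,t))$. Symmetrically, reading $p$ as terminating at $t$, it matches the tail-AP template with $e'_0 = h$, giving $p \in S(\textit{AP}_{\textit{tail}}(h,r,t))$.

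Next I would prove the reverse inclusion. Let $p \in S(\textit{AP}_{\textit{head}}(h,r,t)) \cap S(\textit{AP}_{\textit{tail}}(h,r,t))$. Because $p$ is a head-AP, its first entity is $h$; because $p$ is also a tail-AP, its last entity is $t$. Hence $p$ fits the CP template $h \xrightarrow{r'_1} \cdots \xrightarrow{r'_n} t$ and belongs to $S(\textit{CP}(h,r,t))$. Combining the two inclusions yields equality.

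The argument is essentially bookkeeping on path endpoints, so I do not anticipate a serious technical obstacle. The only subtle point is ensuring that ``path'' is interpreted consistently across the three definitions, in particular that the intermediate relation sequence is unconstrained in all three cases (so the intersection is not artificially narrowed). Once this is made explicit, the proof reduces to the two one-line endpoint observations above.
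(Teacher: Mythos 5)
Your proposal is correct and follows essentially the same route as the paper's proof: a double-inclusion argument observing that a CP, by starting at $h$ and ending at $t$, satisfies both AP templates, and conversely that a path in the intersection has $h$ as its first entity and $t$ as its last, hence is a CP. No meaningful difference in approach.
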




        Theorem~\ref{theo_cp_to_ap} illustrates the relationship between APs and CPs. Specifically, the complete set of CPs is precisely the intersection of both head-APs and tail-APs. APs capture a wider range of diverse paths including all CPs. Such paths contain rich information due to their connections with head or tail entities, particularly useful when CPs are absent.

        To effectively leverage all relation information in KGs, inverse relations $R^{-1}=\{r^{-1}|r\in R\}$ and inverse triplets $\{(t_i,r_i^{-1},h_i)|i=1,2,3,...,m\}$ are usually added into the graph. Consequently, paths in the following forms:
        \begin{equation*}
        \begin{aligned}
            e_0 \xrightarrow[]{r_1}\xrightarrow[]{r_2} ... \xrightarrow[]{r_{n-1}} \xrightarrow[]{r_n} h, \\
            t \xrightarrow[]{r'_1}\xrightarrow[]{r'_2} ... \xrightarrow[]{r'_{n-1}}\xrightarrow[]{r'_n} e'_n,
        \end{aligned}
        \end{equation*}
        can be converted to $\textit{AP}_{\textit{head}}$ and $\textit{AP}_{\textit{tail}}$, respectively:
        \begin{equation*}
        \begin{aligned}
            h \xrightarrow[]{{r_{n-1}}^{-1}}\xrightarrow[]{{r_{n-2}}^{-1}} ... \xrightarrow[]{} \xrightarrow[]{{r_1}^{-1}} e_0, \\
            e'_n \xrightarrow[]{{r'_{n-1}}^{-1}}\xrightarrow[]{{r'_{n-2}}^{-1}} ... \xrightarrow[]{} \xrightarrow[]{{r'_1}^{-1}} t.
        \end{aligned}
        \end{equation*}

        By incorporating these inverse relations and triplets,  paths in $G'(E, R \cup R^{-1})$ could be simply categorized and represented (see Theorem~\ref{theo_any_path}).
        \begin{myTheo}
        \label{theo_any_path}
        Given a KG with inverse relations $G'(E, R \cup R^{-1})$ and a query triplet $(h,r,t)$, any path in $G'$ can be represented in one of the following exclusive categories:

            (\romannumeral1)~A path not traversing through either $h$ or $t$;

            (\romannumeral2)~An AP: $\textit{AP}_{\textit{head}}(h,r,t)$ or $\textit{AP}_{\textit{tail}}(h,r,t)$;

            (\romannumeral3)~A concatenation of APs: $\textit{AP}_1 \circ \textit{AP}_2 \circ ... \circ \textit{AP}_n$.
        \end{myTheo}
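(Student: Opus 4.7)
The plan is to classify the path by tracking where it visits the query endpoints $h$ and $t$. Given an arbitrary path $P: u_0 \to u_1 \to \cdots \to u_k$ in $G'$, let $I = \{\, i \in \{0,\ldots,k\} : u_i \in \{h,t\} \,\}$ record the positions at which $P$ hits $h$ or $t$. The three categories of the theorem correspond, respectively, to $I = \emptyset$, to $|I|$ producing exactly one AP, and to $|I|$ forcing a decomposition into several APs.

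If $I = \emptyset$, then $P$ avoids both $h$ and $t$, so it falls under case (i).

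If $I \neq \emptyset$, write $I = \{i_1 < i_2 < \cdots < i_m\}$ and cut $P$ at each vertex $u_{i_j}$, producing sub-paths $P_1, P_2, \ldots, P_s$ with $P = P_1 \circ P_2 \circ \cdots \circ P_s$. By construction, every interior splitting vertex lies in $\{h,t\}$, so each $P_j$ has at least one endpoint in $\{h,t\}$. The key step is to verify that each $P_j$ is an AP: if $P_j$ starts at $h$, it is already an $\textit{AP}_{\textit{head}}$; else if it ends at $t$, it is already an $\textit{AP}_{\textit{tail}}$; otherwise $P_j$ must end at $h$ or start at $t$, and reversing $P_j$ edge-by-edge (replacing each $r$ by its inverse $r^{-1} \in R^{-1}$) produces a valid path in $G'$ that starts at $h$ or ends at $t$. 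This reversal is precisely the conversion displayed immediately before the theorem statement, and it is available because $G'$ was constructed to contain every inverse edge.

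To conclude, if $s = 1$ and the single sub-path is already an AP, then $P$ falls in case (ii); otherwise $s \geq 2$ and $P = P_1 \circ P_2 \circ \cdots \circ P_s$ is a nontrivial concatenation, placing $P$ in case (iii). Mutual exclusivity follows because (i) is distinguished from (ii) and (iii) by whether $h$ or $t$ is visited at all, while (ii) and (iii) are distinguished by whether one AP suffices versus a strict concatenation being needed. The main obstacle will be the bookkeeping when $P$ visits $h$ or $t$ in complex patterns (repeated visits, both endpoints already in $\{h,t\}$, or mixed $h/t$ occurrences); the presence of $R^{-1}$ in $G'$ is exactly what keeps this an exhaustive but routine case check rather than a structural obstruction.
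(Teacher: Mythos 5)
Your proposal is correct and follows essentially the same route as the paper's own proof: index the positions along the path where $h$ or $t$ occurs, split the path at those positions, and argue each resulting segment is an AP, with the empty-index and endpoint-only cases giving categories~(\romannumeral1) and~(\romannumeral2). If anything, you are slightly more explicit than the paper about using the inverse relations in $R^{-1}$ to reverse segments that end at $h$ or start at $t$, a step the paper leaves implicit in its remark preceding the theorem.
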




        Theorem~\ref{theo_any_path} demonstrates that any path having connections with the query triplet can be represented using an AP (Type~ \romannumeral2) or a concatenated AP (Type~ \romannumeral3). Empirically, paths that do not traverse through either $h$ or $t$ (Type~ \romannumeral1) are unlikely to provide useful information supporting the reasoning.
        To capture supporting evidence from both APs (Type~\romannumeral2) and concatenated APs (Type~\romannumeral3), we carefully design the architecture of APST, enabling it to take multiple paths (Types \romannumeral2~and~\romannumeral3) as inputs. For a concatenated AP, we may either regard it as a single path or break it into multiple APs to input together. Based on the preliminary results, we find that breaking concatenated APs into multiple APs usually leads to better performance in relation prediction. Therefore, for all experiments on APST in this paper, we break all concatenated APs and use the resulting APs as inputs.

    \subsection{Anchoring Path Filtering}
        APs can be extracted with ease by fixating the head or tail entity and then performing a random walk in the given KG. However, APs without restrictions, typically do not contribute logical supporting evidence. For instance, referring to Figure~\ref{fig1}, the following two APs do not provide logical evidence w.r.t triplet $(\textit{Morgan},\textit{Profession},\textit{Actor})$:
        \begin{equation*}
        \small
            \textit{Morgan} \xrightarrow[]{\textit{BornIn}} \textit{Memphis}\xrightarrow[]{\textit{State}} \textit{Tennessee},
        \end{equation*}
        \begin{equation*}
        \small
            \textit{Morgan} \xrightarrow[]{\textit{Gender}} \textit{Male}.
        \end{equation*}
        Such APs constitute the majority of randomly generated APs in real-world KGs. To eliminate these logically irrelevant APs, we propose AP accuracy and AP recall to measure their  rationality w.r.t the given query triplet. For brevity, we only use head-APs to introduce the following techniques, because it is trivial to apply these techniques to tail-APs.

        AP accuracy and AP recall are designed to identify logical APs. Given a query triplet $(h_q,r_q,t_q)$ and a corresponding $\textit{AP}_{\textit{head}}(h_q,r_q,t_q)$, we focus on $r_q$ and the chain of relations $r_{\textit{AP}_{\textit{head}}}=(r_1,r_2,r_3,...,r_n)$ on the AP. By traversing the training graph $G_{\textit{train}}$, we can find multiple paths that satisfy the chain of relations $r_{\textit{AP}_{\textit{head}}}$:
        \begin{equation*}
            \begin{aligned}
                h^1 \xrightarrow[]{r_1^1} \xrightarrow[]{r_2^1} &... \xrightarrow[]{r_{{n_1}-1}^1} \xrightarrow[]{r_{n_1}^1} e_{n_1}^1, \\
                h^2 \xrightarrow[]{r_1^2} \xrightarrow[]{r_2^2} &... \xrightarrow[]{r_{{n_2}-1}^2} \xrightarrow[]{r_{n_2}^2} e_{n_2}^2, \\
                &... \\
                h^m \xrightarrow[]{r_1^m} \xrightarrow[]{r_2^m} &... \xrightarrow[]{r_{{n_m}-1}^m} \xrightarrow[]{r_{n_m}^m} e_{n_m}^m.
            \end{aligned}
        \end{equation*}
        Let $E^{\textit{head}}_{\textit{AP}}=\{h^1,h^2,...,h^m\}$ denote the set of all starting entities of such paths and $E^{\textit{head}}_{r_q}=\{h| (h,r_q,t) \in G_{\textit{train}}\}$ denote the head entities of all triplets having relation $r_q$. Then any entity $e$ on $G_{\textit{train}}$ could be classified into one of the following exclusive categories:

            (\romannumeral1)~Path-Triplet \textbf{(PT)}: $e \in E^{\textit{head}}_{\textit{AP}}, e \in E^{\textit{head}}_{r_q}$,

            (\romannumeral2)~Path-Only \textbf{(PO)}: $e \in E^{\textit{head}}_{\textit{AP}}, e \notin E^{\textit{head}}_{r_q}$,

            (\romannumeral3)~Triplet-Only \textbf{(TO)}: $e \notin E^{\textit{head}}_{\textit{AP}}, e \in E^{\textit{head}}_{r_q}$,

            (\romannumeral4)~No-Connection \textbf{(NC)}: $e \notin E^{\textit{head}}_{\textit{AP}}, e \notin E^{\textit{head}}_{r_q}$.

        Logical APs typically possess the following favorable characteristics on $G_{\textit{train}}$:

        \noindent \textbf{High accuracy:}  Logical APs are expected to provide supporting evidence w.r.t the query triplet. Thus, if such a logical AP exists in $G_{\textit{train}}$, there should be a high likelihood of the query triplet's existence. To measure the existence ratio of the query triplet, we define AP accuracy as follows:
        \begin{equation}\label{eq_acc}
            \textit{acc}(r_q, r_{\textit{AP}_{\textit{head}}})= \frac{\#\textit{PT}}{\#\textit{PO}+\#\textit{PT}},
        \end{equation}
        where \# denotes the number of all entities in the respective categories.

        \noindent \textbf{High recall:} Certain APs with high accuracy may not contribute much towards relation predictions because these APs rarely appear in $G_{\textit{test}}$. Empirically speaking, such APs usually also appear infrequently in $G_{\textit{train}}$. To better preserve the generalization from $G_{\textit{train}}$ to $G_{\textit{test}}$, logical APs corresponding to the query triplet should have high occurrence in $G_{\textit{train}}$. To measure the occurrence ratio of an AP, we define AP recall as follows:
        \begin{equation}\label{eq_rec}
            \textit{rec}(r_q, r_{\textit{AP}_{\textit{head}}})= \frac{\#\textit{PT}}{\#\textit{TO}+\#\textit{PT}}.
        \end{equation}

    With AP accuracy and AP recall defined, the rationality of $r_{\textit{AP}_{\textit{head}}}$ corresponding to $r_q$ can be quantified.

    During training, we extract logical APs as follows: (\romannumeral1)~Choose a relation $r_q$ and extract a set of potential APs corresponding to $r_q$ from the given KG; (\romannumeral2)~Evaluate the extracted APs using~(\ref{eq_acc}) and/or~(\ref{eq_rec}); (\romannumeral3)~Exclude APs falling below a pre-determined threshold for either of the two metrics from the potential set; (\romannumeral4)~Store remaining logical APs for $r_q$; and (\romannumeral5)~Choose another relation $r'_q$ and repeat Steps~(\romannumeral1) to~(\romannumeral4) until APs for all relations are extracted.

    During testing, we can find many APs corresponding to the testing query triplets. However, only APs that match the chain of relations in the stored logical AP sets are extracted and utilized as supporting evidence.

    \subsection{Anchoring Path Sentence Transformer}

    \begin{figure*}[!t]
        \centering
        \includegraphics[scale=0.36]{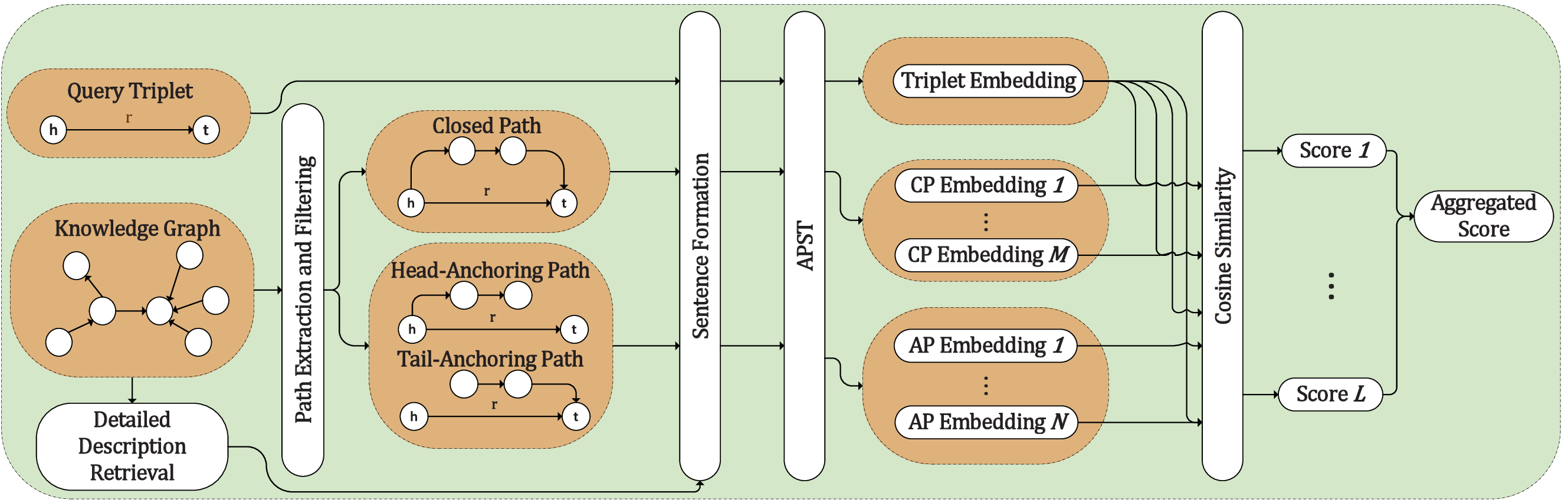}
        \caption{Architecture of Anchoring Path Sentence Transformer (APST).}\label{fig2}
    \end{figure*}

    To fully utilize the supporting evidence provided by the logical APs and incorporate text descriptions for more accurate relation prediction, we propose Anchoring Path Sentence Transformer (APST) by extending the SOTA relation prediction approach~\cite{KRST}. Figure~\ref{fig2} shows the architecture of APST. Specifically, APST sequentially processes the following steps: (\romannumeral1)~Detailed description retrieval; (\romannumeral2)~AP extraction and filtering;  (\romannumeral3)~Sentence formation and AP encoding; and (\romannumeral4)~APST prediction.

    \noindent \textbf{Detailed description retrieval:}  To effectively utilize the semantic knowledge in PLMs, rich text descriptions of entities and relations are necessary. However, descriptions are not mandatory when constructing KGs, hence are not provided in many KGs. Even the commonly adopted KG datasets, such as FB15k-237 and NELL-995, only provide the entity names with a couple of words as descriptions, making it difficult for the Sentence Transformer to accurately capture the semantic meaning of entities. In APST, we deploy Google Custom Search API
    to search on specialized websites and automatically retrieve detailed descriptions. Google Custom Search allows us to search on a specific website and return pages relevant to the given keywords. Such descriptions on the returned pages are detailed and can be formulated into input sentences.

    \noindent \textbf{AP extraction and filtering:} As discussed in the context of Theorem~\ref{theo_cp_to_ap}, CPs are subsets of APs. Because CPs connect to both the head and tail entities, they are usually more informative than other APs. Therefore, APST first generates APs and filters out logically irrelevant ones using~(\ref{eq_acc}) and/or~(\ref{eq_rec}) as previously introduced. Subsequently, a number of $L$ paths are chosen as inputs for APST, including all CPs and several randomly selected logical APs.

    \noindent \textbf{Sentence formation and AP encoding:} To take advantage of the text descriptions for entities and relations, we convert the query triplets and the filtered APs into sentences.  With detailed descriptions for entities, our input sentences for CPs, APs, and query triplets are formulated as follows:
    \begin{equation*}
    \begin{aligned}
        \textit{Sentence}_{\textit{CP}}: \: & [e_1][D_{e_1}][r_1][e_2][r_2]...[e_{n-1}][r_n][e_n][D_{e_n}], \\
        \textit{Sentence}_{\textit{AP}}: \: & [e_1][D_{e_1}][r_1][r_2]...[r_n][e_n][D_{e_n}], \\
        \textit{Sentence}_{q}: \: & [h_q][D_{h_q}][r_q][t_q][D_{t_q}],
    \end{aligned}
    \end{equation*}
    where $D_e$ denotes the detailed descriptions for the corresponding entity $e$.
    We only use detailed descriptions for the first and last entities because the descriptions for these two entities are usually the most important ones. Moreover, including detailed descriptions for all entities often results in surpassing the maximum input length permitted by the adopted Sentence Transformer, leading to the unwanted truncation of the overall description.
    Then we apply Sentence Transformer to encode paths and triplets. Specifically, the formed sentences are encoded into embeddings. These embeddings contain textual information of triplets and APs, which can be used for the next APST prediction step.

    \noindent \textbf{APST prediction:} During training, triplets with labels 1 and -1 are regarded as positive and negative triplets, respectively. In APST, APs are considered as having similar semantic meanings with the corresponding positive triplet. A query triplet is predicted as positive if at least one of the corresponding APs has a high similarity score with it. To measure the similarity score between an AP and the corresponding query triplet, we apply cosine similarity as follows:
    \begin{equation}
        \textit{Sim}(s_{\textit{hrt}},s_\textit{AP})= \frac{s_{\textit{hrt}} s_\textit{AP}}{|s_{\textit{hrt}}||s_\textit{AP}|},
    \end{equation}
    where $s_{\textit{hrt}}$ and $s_\textit{AP}$ denote the sentence embeddings of the query triplet and the corresponding AP, respectively.
    APST allows up to a number of $L$ APs as inputs to compute the similarity scores. The AP with the highest similarity score is considered to be the most reasonable path and the highest similarity score is considered to be the final score for the corresponding triplet:
    \begin{equation}
        \textit{Score}(h,r,t)=max\{\textit{Sim}(s_{\textit{hrt}},s_{\textit{AP}_i})|i=1,2,...,L\}.
    \end{equation}
    During training, we use cosine embedding loss as the training  signal, which is formulated as follows:
    \begin{equation}
    \label{eq_cos_loss}
        \mathscr{L}(h,r,t,y)=\left\{
        \begin{aligned}
        & 1-\textit{Score}(h,r,t), &  y=1,\\
        & \max(0,\textit{Score}(h,r,t)-M), & y=-1,
        \end{aligned}
        \right.
    \end{equation}
    where $M \in (-1,1)$ is a pre-defined margin and $y \in \{1,-1\}$ denotes the label.

    \begin{table*}[tbh!]
    \footnotesize
    \renewcommand\arraystretch{0.75}
    \centering
    \caption{Transductive and inductive results on WN18RR, FB15k-237 and NELL-995}
    \label{table_transductive_and_inductive}
    \begin{tabular}{llcccccc}\toprule
          &         & \multicolumn{3}{c}{Transductive}                 & \multicolumn{3}{c}{Inductive}                    \\
           \cmidrule(lr){3-5} \cmidrule(lr){6-8}
          &         & WN18RR         & FB15k-237      & NELL-995       & WN18RR         & FB15k-237      & NELL-995       \\ \midrule \midrule
    MRR   & RuleN   & 0.669          & 0.674          & 0.736          & 0.780          & 0.462          & 0.710          \\
          & GRAIL   & 0.676          & 0.597          & 0.727          & 0.799          & 0.469          & 0.675          \\
          & MINERVA & 0.656          & 0.572          & 0.592          & -              & -              & -              \\
          & TuckER  & 0.646          & 0.682          & \underline{0.800}    & -              & -              & -              \\
          & KG-BERT & -              & -              & -              & 0.547          & 0.500          & 0.419          \\
          & BERTRL  & 0.683          & 0.695          & 0.781          & 0.792          & 0.605          & \textbf{0.808} \\
          & KRST    & \underline{0.899}    & \underline{0.720}    & \underline{0.800}    & \underline{0.890}    & \underline{0.716}    & \underline{0.769}    \\  \cmidrule(lr){2-8}
          & APST    & \textbf{0.902} & \textbf{0.774} & \textbf{0.801} & \textbf{0.908} & \textbf{0.764} & \underline{0.769}    \\ \midrule \midrule
    Hit@1 & RuleN   & 0.646          & 0.603          & 0.636          & 0.745          & 0.415          & 0.638          \\
          & GRAIL   & 0.644          & 0.494          & 0.615          & 0.769          & 0.390          & 0.554          \\
          & MINERVA & 0.632          & 0.534          & 0.553          & -              & -              & -              \\
          & TuckER  & 0.600          & 0.615          & \textbf{0.729} & -              & -              & -              \\
          & KG-BERT & -              & -              & -              & 0.436          & 0.341          & 0.244          \\
          & BERTRL  & 0.655          & 0.620          & 0.686          & 0.755          & 0.541          & \textbf{0.715} \\
          & KRST    & \underline{0.835}    & \underline{0.639}    & 0.694          & \underline{0.809}    & \underline{0.600}    & 0.649          \\  \cmidrule(lr){2-8}
          & APST    & \textbf{0.839} & \textbf{0.694} & \underline{0.698}    & \textbf{0.837} & \textbf{0.643} & \underline{0.663}   \\ \bottomrule
    \end{tabular}
    \end{table*}

\section{Experimental Results}\label{section_experiments}
    We conduct extensive experiments using three datasets to comprehensively evaluate the performance of APST. Additionally, we perform an ablation study to assess the impact and effectiveness of APs and detailed descriptions. To further illustrate their significance, we present a case study showcasing AP explanations on FB15k-237 in Appendix~\ref{sec_appendix_case_study}.

    \subsection{Experimental Setup and Implementation Details}
    Following the setup used in \cite{BERTRL} and \cite{KRST}, we conduct experiments using the commonly benchmarked transductive and inductive datasets introduced by \cite{GRAIL}, which are the subsets of  WN18RR, FB15k-237, and NELL-995 (see Appendix~\ref{sec_appendix_dataset}). Moreover, for each query triplet, 1 positive and 49 negative candidate entities are provided by \cite{BERTRL} for evaluation. We measure the mean reciprocal rank (MRR) and hit rate (Hit@1) among the 50 candidate entities. For inductive experiments, there is zero overlap between training and testing entities. Few-shot experiments are conducted on both inductive and transductive datasets, with 1000 or 2000 training triplets randomly selected.

    We implement APST\footnote{github.com/ZhixiangSu/APST} based on the SOTA Sentence Transformer (all-mpnet-base-v2\footnote{huggingface.co/sentence-tformers/all-mpnet-base-v2}) using PyTorch and train it on two NVIDIA Tesla V100 GPUs with 32GB RAM.

    In APST, we apply both~(\ref{eq_acc}) and~(\ref{eq_rec}) with thresholds for filtering. More detailed hyper-parameter settings are introduced in Appendix~\ref{sec_appendix_hyper}. To automatically retrieve detailed descriptions, we use the entity names along with short descriptions in all datasets as keywords for Google Custom Search on entity deciphering websites. Because FB15k-237 and NELL-995 only contain commonsense knowledge, we search their entities on Wikipedia.
    For multilingual words in WN18RR, we search them on Wiktionary.

    \begin{table*}[tbh!]
    \footnotesize
    \renewcommand\arraystretch{0.75}
    \centering
    \caption{Few-shot results on WN18RR, FB15k-237 and NELL-995}
    \label{table_few_shot}
    \begin{tabular}{llcccccccccccc}
                \toprule
              &         & \multicolumn{6}{c}{Transductive}                                                                    & \multicolumn{6}{c}{Inductive}                                                                       \\ \cmidrule(lr){3-8} \cmidrule(lr){9-14}
              &         & \multicolumn{2}{c}{WN18RR}      & \multicolumn{2}{c}{FB15k-237}   & \multicolumn{2}{c}{NELL-995}    & \multicolumn{2}{c}{WN18RR}      & \multicolumn{2}{c}{FB15k-237}   & \multicolumn{2}{c}{NELL-995}    \\\cmidrule(lr){3-4} \cmidrule(lr){5-6} \cmidrule(lr){7-8} \cmidrule(lr){9-10} \cmidrule(lr){11-12} \cmidrule(lr){13-14}
              &         & 1000           & 2000           & 1000           & 2000           & 1000           & 2000           & 1000           & 2000           & 1000           & 2000           & 1000           & 2000           \\\midrule \midrule
        MRR   & RuleN   & 0.567          & 0.625          & 0.434          & 0.577          & 0.453          & 0.609          & 0.681          & 0.773          & 0.236          & 0.383          & 0.334          & 0.495          \\
              & GRAIL   & 0.588          & 0.673          & 0.375          & 0.453          & 0.292          & 0.436          & 0.652          & 0.799          & 0.380          & 0.432          & 0.458          & 0.462          \\
              & MINERVA & 0.125          & 0.268          & 0.198          & 0.364          & 0.182          & 0.322          & -              & -              & -              & -              & -              & -              \\
              & TuckER  & 0.258          & 0.448          & 0.457          & 0.601          & 0.436          & 0.577          & -              & -              & -              & -              & -              & -              \\
              & KG-BERT & -              & -              & -              & -              & -              & -              & 0.471          & 0.525          & 0.431          & 0.460          & 0.406          & 0.406          \\
              & BERTRL  & 0.662          & 0.673          & 0.618          & 0.667          & 0.648          & 0.693          & 0.765          & 0.777          & 0.526          & 0.565          & 0.736          & \underline{0.744}    \\
              & KRST    & \underline{0.871}    & \textbf{0.882} & \underline{0.696}    & \underline{0.701}    & \underline{0.743}    & \textbf{0.781} & \underline{0.886}    & \underline{0.878}    & \underline{0.679}    & \underline{0.680}    & \underline{0.745}    & 0.738          \\ \cmidrule(lr){2-14}
              & APST    & \textbf{0.874} & \underline{0.880}    & \textbf{0.724} & \textbf{0.753} & \textbf{0.745} & \underline{0.767}    & \textbf{0.894} & \textbf{0.879} & \textbf{0.697} & \textbf{0.747} & \textbf{0.765} & \textbf{0.747} \\ \midrule \midrule
        Hit@1 & RuleN   & 0.548          & 0.605          & 0.374          & 0.508          & 0.365          & 0.501          & 0.649          & 0.737          & 0.207          & 0.344          & 0.282          & 0.418          \\
              & GRAIL   & 0.489          & 0.633          & 0.267          & 0.352          & 0.198          & 0.342          & 0.516          & 0.769          & 0.273          & 0.351          & 0.295          & 0.298          \\
              & MINERVA & 0.106          & 0.248          & 0.170          & 0.324          & 0.152          & 0.284          & -              & -              & -              & -              & -              & -              \\
              & TuckER  & 0.230          & 0.415          & 0.407          & 0.529          & 0.392          & 0.520          & -              & -              & -              & -              & -              & -              \\
              & KG-BERT & -              & -              & -              & -              & -              & -              & 0.364          & 0.404          & 0.288          & 0.317          & 0.236          & 0.236          \\
              & BERTRL  & 0.621          & 0.637          & 0.517          & 0.583          & 0.526          & 0.582          & 0.713          & 0.731          & 0.441          & 0.493          & 0.622          & 0.628          \\
              & KRST    & \underline{0.790}    & \underline{0.810}    & \underline{0.611}    & \underline{0.602}    & \underline{0.628}    & \textbf{0.678} & \underline{0.811}    & \underline{0.793}    & \underline{0.537}    & \underline{0.524}    & \underline{0.637}    & \underline{0.629}    \\\cmidrule(lr){2-14}
              & APST    & \textbf{0.798} & \textbf{0.813} & \textbf{0.632} & \textbf{0.665} & \textbf{0.640} & \underline{0.663}    & \textbf{0.822} & \textbf{0.798} & \textbf{0.561} & \textbf{0.627} & \textbf{0.654} & \textbf{0.637} \\\bottomrule
        \end{tabular}
        \end{table*}

    \begin{figure*}[!t]
        \centering
        \begin{minipage}{500pt}
            \centering
            \emph{\textbf{MRR}}
        \end{minipage}
        \subfigure[FB15k-237-t]{
            \includegraphics[scale=0.30]{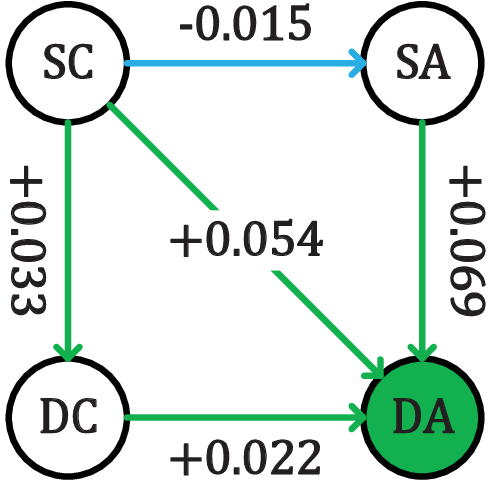}
            \label{fig3.1.1}
            }
        \subfigure[FB15k-237-i]{
            \includegraphics[scale=0.30]{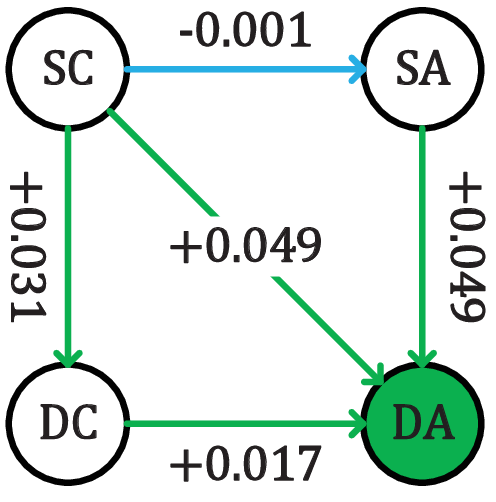}
            \label{fig3.2.1}
            }
        \subfigure[NELL-995-t]{
            \includegraphics[scale=0.30]{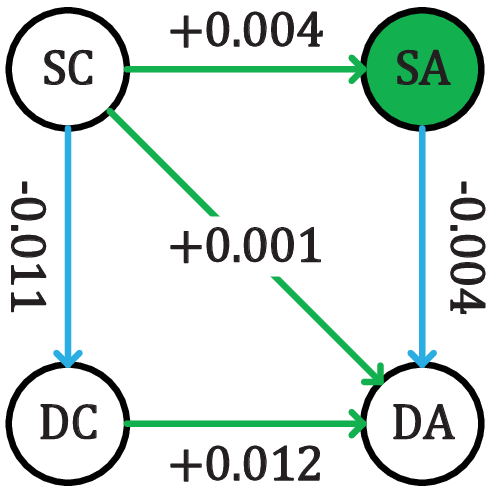}
            \label{fig3.3.1}
            }
        \subfigure[NELL-995-i]{
            \includegraphics[scale=0.30]{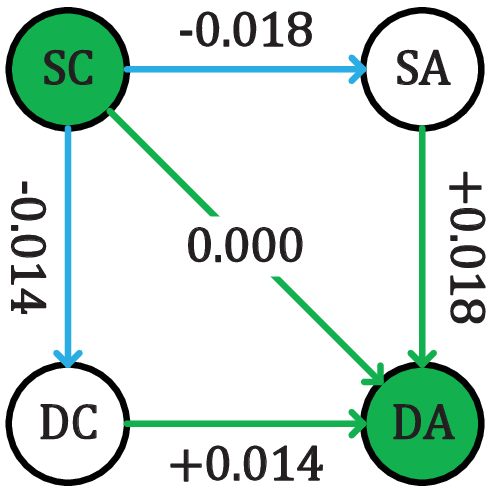}
            \label{fig3.4.1}
            }
        \subfigure[WN18RR-t]{
            \includegraphics[scale=0.30]{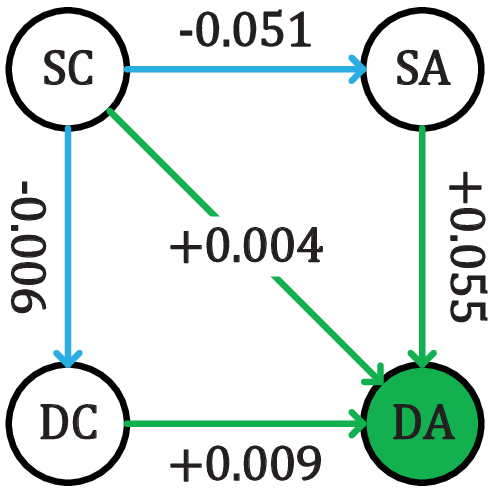}
            \label{fig3.5.1}
            }
        \subfigure[WN18RR-i]{
            \includegraphics[scale=0.30]{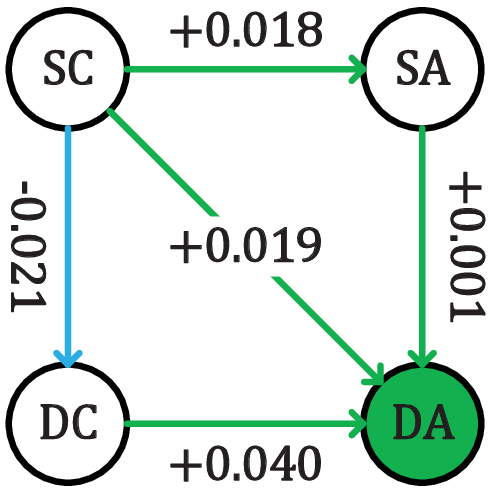}
            \label{fig3.6.1}
            }
            \\
        \begin{minipage}{500pt}
            \centering
            \emph{\\ \textbf{Hit@1}}
        \end{minipage}
        \subfigure[FB15k-237-t]{
            \includegraphics[scale=0.30]{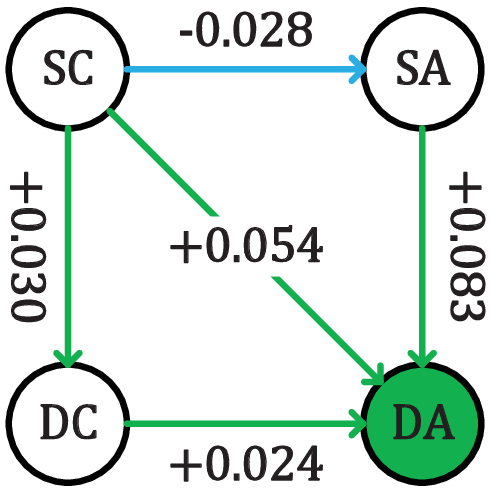}
            \label{fig3.1.2}
            }
        \subfigure[FB15k-237-i]{
            \includegraphics[scale=0.30]{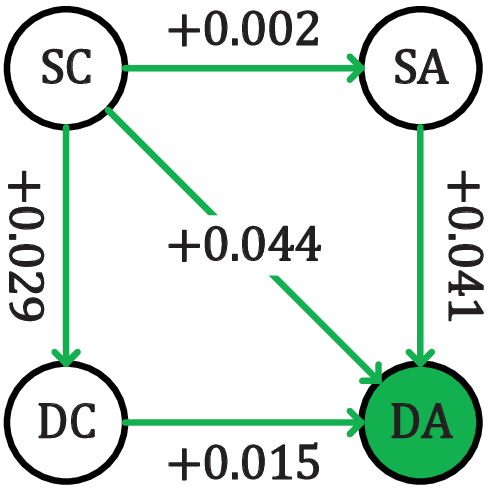}
            \label{fig3.2.2}
            }
        \subfigure[NELL-995-t]{
            \includegraphics[scale=0.30]{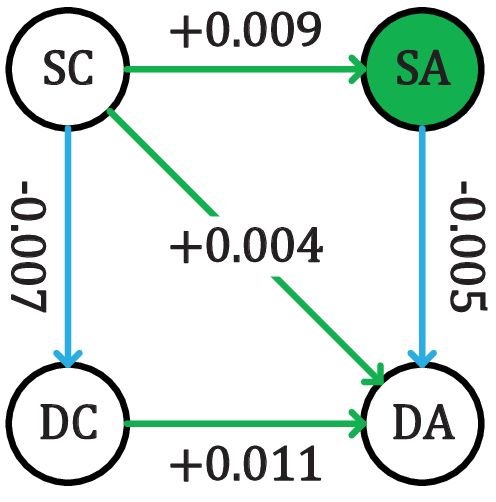}
            \label{fig3.3.2}
            }
        \subfigure[NELL-995-i]{
            \includegraphics[scale=0.30]{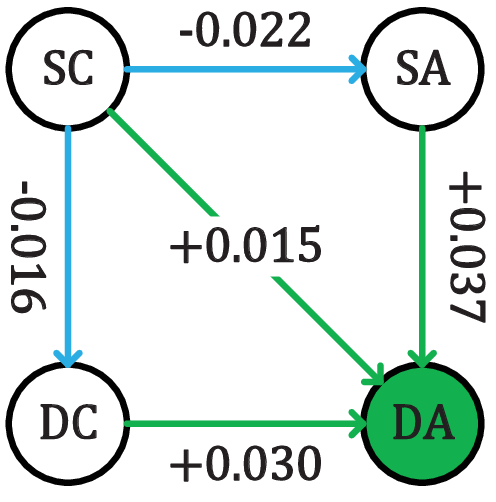}
            \label{fig3.4.2}
            }
        \subfigure[WN18RR-t]{
            \includegraphics[scale=0.30]{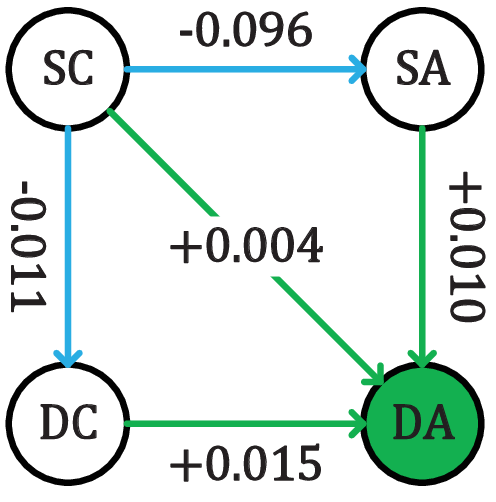}
            \label{fig3.5.2}
            }
        \subfigure[WN18RR-i]{
            \includegraphics[scale=0.30]{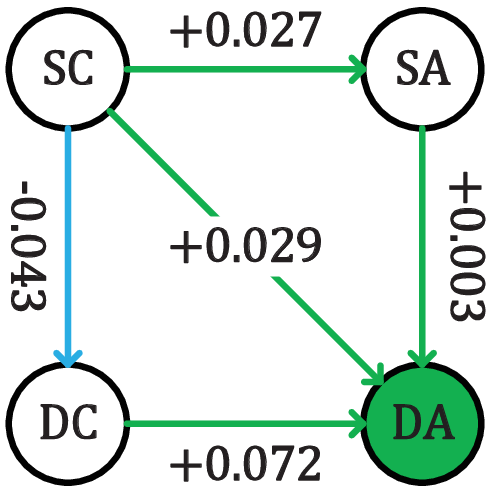}
            \label{fig3.6.2}
            }
        \caption{Ablation study results on FB15k-237, NELL-995 and WN19RR (-t: transductive;
            -i: inductive).}
        \label{ablation_study_fig}
    \end{figure*}


    \subsection{Transductive and Inductive Relation Predictions}

    We compare APST against various approaches including RuleN, GRAIL, MINERVA, TuckER, KG-BERT,  BERTRL, and KRST. Among these baselines, TuckER, GRAIL, and KRST are the SOTA embedding-based, GNN-based, and path-based methods, respectively.

    For both transductive and inductive settings,
    training graph $G_{\textit{train}}$ is used to extract logical APs for training APST. However, during testing in inductive settings, APs for query triplets do not exist in $G_{\textit{train}}$ due to zero overlap between $E_{\textit{train}}$ and $E_{\textit{query}}$  (see Definition~\ref{def_inductive_relation_prediction}). Therefore, we follow the configuration applied in \cite{KRST} and extract APs from $G_{\textit{test}}$ in inductive settings.

    Table~\ref{table_transductive_and_inductive} reports the results of transductive and inductive relation predictions. APST achieves the best (bolded) MRR performance in 5 out of 6 settings and the best Hit@1 rate in 4 out of 6. For the remaining settings, APST achieves the second-best (underlined) performance. Notably, APST's performance is significantly better in the transductive setting of FB15k-237 (+0.054 for MRR and +5.5\% for Hit@1), inductive setting of WN18RR (+0.018 for MRR and +2.8\% for Hit@1), and inductive setting of FB15k-237 (+0.048 for MRR and +4.3\% for Hit@1). In both the transductive and inductive settings of NELL-995, APST outperforms KRST. However, APST only achieves the second-best performance on NELL-995, we hold the view that APST's performance is limited by the retrieved detailed descriptions (see ablation study results for more details).

    \subsection{Few-shot Relation Predictions}

    We use the subsets, containing 1000 (or 2000) training triplets for all three datasets provided by ~\cite{BERTRL}, to further benchmark the performance of APST in the few-shot experiments.
    As shown in Table~\ref{table_few_shot}, APST demonstrates superior performance in majority few-shot scenarios (10 out of 12 for MRR and 11 out of 12 for Hit@1). For the remaining scenarios, APST achieves the second-best performance. These results underscore the robust transferability of APST, showcasing its remarkable performance even with a limited number of training pairs.

    \subsection{Ablation Study Results}
    To further investigate the effectiveness of the key components in APST, we conduct ablation studies using all three datasets in both the transductive and inductive settings.  The various testing scenarios are listed as follows:

        (\romannumeral1)~\textbf{SC:} Short descriptions with CPs only,

        (\romannumeral2)~\textbf{SA:} Short descriptions with APs,

        (\romannumeral3)~\textbf{DC:} Detailed descriptions with CPs only,

        (\romannumeral4)~\textbf{DA:} Detailed descriptions with APs.

    Specifically, for SC and SA, we remove the retrieved detailed descriptions and only apply the short descriptions provided in the respective datasets. For SC and DC, we only use CPs and remove all other APs for encoding and prediction. The results of the ablation studies are visualized in Figure~\ref{ablation_study_fig}.

    APST takes in inputs of APs with detailed descriptions, i.e., DA. DA achieves the best performance in 5 out of 6 cases in terms of MRR and Hit@1, respectively. This finding demonstrates that both long descriptions and APs contribute to elevated performance. DA leads to a better performance than DC in all settings, suggesting that with long descriptions, APs always lead to performance gain. This phenomenon does not apply between SA and SC. Thus, we hold the view that detailed descriptions are significantly helpful when APs are used for predictions. Comparing SC with DC and comparing SA with DA, results show that detailed descriptions do not always lead to better performance (5 of 12 for both MRR and Hit@1). Probably the most plausible reason is that our detailed descriptions are obtained from Wikipedia and Wiktionary. FB15k-237 originates from Wikipedia, and its entities exactly match the objects found in Wikipedia. However, for entities in NELL-995 and WN18RR, their meanings may not exactly match the descriptions, which leads to a performance decrease. In summary, although using detailed descriptions may negatively impact the effectiveness of CPs, utilizing APs with detailed descriptions will result in a notable enhancement and yield superior performance.

\section{Conclusion}
    In this paper, we propose APST, which is a novel path-based method utilizing Sentence Transformer. APST retrieves detailed descriptions, captures supporting evidence in the form of Anchoring Paths (APs), and incorporates both AP and CP inputs for comprehensive predictions and convincing explanations. Through experiments on commonly applied datasets, APST demonstrates the potential for leveraging textual knowledge from external sources to enrich KGs and take advantage of APs to improve prediction accuracy.

    Nonetheless, the current APST model primarily focuses on relatively small datasets. Going forward, we aim to develop a more memory-efficient variant to handle increased data complexity and apply APST to larger datasets.

\section{Acknowledgements}
    This research is supported, in part, by the National Research Foundation, Prime Minister’s Office, Singapore under its NRF Investigatorship Programme (NRFI Award No. NRF-NRFI05-2019-0002). Any opinions, findings and conclusions or recommendations expressed in this material are those of the author(s) and do not reflect the views of National Research Foundation, Singapore. This research is also supported by the National Key R\&D Program of China No. 2021YFF0900800 and Shandong Provincial Key Research and Development Program (Major Scientific and Technological Innovation Project) (No. 2021CXGC010108).


\section{Appendices}
\setcounter{secnumdepth}{2}

\begin{appendix}

\begin{table*}[!t]
        \footnotesize
        \caption{Selected examples of APs extracted from FB15k-237}
        \label{table_case_study}
        \begin{tabular}{llcc}
        \toprule
        Triplet                                                         & Anchoring Path                                                                                  & Accuracy & Recall \\ \midrule \midrule
        $h \xrightarrow[]{country/official\_language}t$          & $e_0 \xrightarrow[]{film/country}\xrightarrow[]{{human\_language/countries\_spoken\_in}^{-1}}t$ & 0.67     & 1.00   \\
        $h \xrightarrow[]{bibs\_location/country} t$             & $e_0 \xrightarrow[]{{people/place\_lived/location}^{-1}}\xrightarrow[]{person/nationality}t$    & 1.00     & 0.67   \\
        $h \xrightarrow[]{tv\_program/languages} t$              & $e_0 \xrightarrow[]{tv/tv\_producer\_term/program}\xrightarrow[]{film/language}t$               & 1.00     & 1.00   \\
        $h \xrightarrow[]{netflix\_genre/titles} t$              & $h\xrightarrow[]{{film/genre}^{-1}}\xrightarrow[]{film\_crew\_gig/film\_crew\_role}e_n$         & 0.60     & 0.67   \\
        $h \xrightarrow[]{film/edited\_by} t$                    & $h\xrightarrow[]{{director/film}^{-1}}\xrightarrow[]{{film/edited\_by}^{-1}}e_n$                & 1.00     & 1.00   \\
        $h \xrightarrow[]{ethnicity/geographic\_distribution} t$ & $h\xrightarrow[]{{film/country}^{-1}}\xrightarrow[]{film/language}e_n$                          & 0.67     & 1.00   \\ \bottomrule
        \end{tabular}
    \end{table*}

    \begin{table}[!t]
    \footnotesize
    \renewcommand\arraystretch{0.8}
        \centering
        \caption{Statistics of datasets}\label{table_dataset}
        \begin{tabular}{lllll}
        \toprule
        Dataset   & $G$               & $|R_G|$       & $|E_G|$      & \#Triplets \\ \midrule \midrule
        WN18RR    & train             & 9           & 2746       & 6670       \\
                  & train-2000        & 9           & 1970       & 2002       \\
                  & train-1000        & 9           & 1362       & 1001       \\
                  & test-transductive & 7           & 962        & 638       \\
                  & test-inductive    & 8           & 922        & 1991       \\ \midrule
        FB15k-237 & train             & 180         & 1594       & 5223       \\
                  & train-2000        & 180         & 1280       & 2008       \\
                  & train-1000        & 180         & 923        & 1027       \\
                  & test-transductive & 102         & 550        & 492        \\
                  & test-inductive    & 142         & 1093       & 2404       \\
                  \midrule
        NELL-995  & train             & 88          & 2564       & 10063      \\
                  & train-2000        & 88          & 1346       & 2011       \\
                  & train-1000        & 88          & 893        & 1020       \\
                  & test-transductive & 60          & 1936       & 968        \\
                  & test-inductive    & 79          & 2086       & 6621       \\
                  \bottomrule
        \end{tabular}
    \end{table}

    \begin{table}[!t]
    \footnotesize
    \centering
    \caption{Hyper-parameters for transductive and inductive experiments}
    \label{table_hyper}
    \begin{tabular}{lc}
    \toprule
    Hyper-parameter                                                                                                  & Value     \\ \midrule \midrule
    Random seed                                                                                                   & 42        \\
    Samples for training and validation                                                                           & 5         \\
    Samples for testing                                                                                           & 50        \\
    \begin{tabular}[c]{@{}l@{}}Number of APs extracted for training,\\  validation and testing ($L$)\end{tabular} & 3         \\
    AP search depth                                                                                               & 2         \\
    AP accuracy threshold                                                                                         & 0.5       \\
    AP recall threshold                                                                                           & 0.5       \\
    Number of epochs                                                                                              & 30        \\
    Sentence transformer learning rate                                                                            & $10^{-5}$ \\ \bottomrule
    \end{tabular}
    \end{table}
    \section{Proof of Theorems}\label{sec_appendix_proof}
    \begin{myTheoApp}
        Let $S(p)$ denote the set of all $p$ paths. For any triplet $(h,r,t)$,
        $S(\textit{CP}(h,r,t)) = S(\textit{AP}_{\textit{head}}(h,r,t)) \cap S(\textit{AP}_{\textit{tail}}(h,r,t))$.
        \end{myTheoApp}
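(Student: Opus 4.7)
The plan is to prove this set equality by establishing double inclusion, reading the result directly from the endpoint conditions in the three path definitions. The core observation is that a CP is characterized by two simultaneous endpoint constraints (start at $h$ and end at $t$), while each AP imposes exactly one of those constraints (head-APs fix the starting entity as $h$, tail-APs fix the ending entity as $t$). So the equality should reduce to: ``starts at $h$ and ends at $t$'' is logically equivalent to ``starts at $h$'' and ``ends at $t$''.

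First I would show the forward inclusion $S(\textit{CP}(h,r,t)) \subseteq S(\textit{AP}_{\textit{head}}(h,r,t)) \cap S(\textit{AP}_{\textit{tail}}(h,r,t))$. Take any $p \in S(\textit{CP}(h,r,t))$. By the definition of CP, $p$ has the form $h \xrightarrow{r'_1} \cdots \xrightarrow{r'_n} t$. Matching against the head-AP template with $e_n := t$ shows $p \in S(\textit{AP}_{\textit{head}}(h,r,t))$; matching against the tail-AP template with $e'_0 := h$ and $r'_i$ unchanged shows $p \in S(\textit{AP}_{\textit{tail}}(h,r,t))$. Hence $p$ lies in the intersection.

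For the reverse inclusion, take $p \in S(\textit{AP}_{\textit{head}}(h,r,t)) \cap S(\textit{AP}_{\textit{tail}}(h,r,t))$. Membership in $S(\textit{AP}_{\textit{head}}(h,r,t))$ forces the starting node of $p$ to be $h$, while membership in $S(\textit{AP}_{\textit{tail}}(h,r,t))$ forces the terminal node of $p$ to be $t$. Writing out the intermediate relations of $p$ therefore yields a sequence of the exact shape required of a CP from $h$ to $t$, so $p \in S(\textit{CP}(h,r,t))$. Combining the two inclusions gives the claimed equality.

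There is no real obstacle here: the proof is essentially unpacking the three definitions and recognizing that the CP condition is the conjunction of the two AP conditions. The only mild subtlety is to be explicit that the AP definitions place no restriction on the endpoint that is not ``anchored'' (i.e., $e_n$ and $e'_0$ are free variables), so setting $e_n = t$ or $e'_0 = h$ is always admissible. Once this is noted, the argument is a one-line check in each direction.
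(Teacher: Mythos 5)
Your proposal is correct and follows essentially the same route as the paper's proof: double inclusion, observing that membership in $S(\textit{CP}(h,r,t))$ is exactly the conjunction of the two endpoint conditions imposed by $S(\textit{AP}_{\textit{head}}(h,r,t))$ and $S(\textit{AP}_{\textit{tail}}(h,r,t))$. Your added remark that the unanchored endpoints $e_n$ and $e'_0$ are unconstrained, so they may be instantiated as $t$ and $h$, makes explicit a point the paper leaves implicit but does not change the argument.
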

        \begin{proof}
            Consider an arbitrary path $p_1 \in S(\textit{CP}(h,r,t))$. Because $p_1$ starts from entity $h$ and ends at entity $t$, it naturally belongs to both $S(\textit{AP}_{\textit{head}}(h,r,t))$ and $S(\textit{AP}_{\textit{tail}}(h,r,t))$. Therefore, $p_1 \in S(\textit{AP}_{\textit{head}}(h,r,t)) \cap S(\textit{AP}_{\textit{tail}}(h,r,t))$, establishing that $S(\textit{CP}(h,r,t)) \subseteq S(\textit{AP}_{\textit{head}}(h,r,t)) \cap S(\textit{AP}_{\textit{tail}}(h,r,t))$.

            Let $p_2$ denote a path in $S(\textit{AP}_{\textit{head}}(h,r,t)) \cap S(\textit{AP}_{\textit{tail}}(h,r,t))$. This implies that $p_2$ starts from entity $h$ and ends at entity $t$ simultaneously. Consequently, $p_2$ also qualifies as a valid path in $S(\textit{CP}(h,r,t))$. Thus, $S(\textit{AP}_{\textit{head}}(h,r,t)) \cap S(\textit{AP}_{\textit{tail}}(h,r,t)) \subseteq S(\textit{CP}(h,r,t))$.

            Combining the two inclusions, we can conclude that $S(\textit{CP}(h,r,t)) = S(\textit{AP}_{\textit{head}}(h,r,t)) \cap S(\textit{AP}_{\textit{tail}}(h,r,t))$.
        \end{proof}

    \begin{myTheoApp}
        Given a KG with inverse relations $G'(E, R \cup R^{-1})$ and a query triplet $(h,r,t)$, any path in $G'$ can be represented in one of the following exclusive categories:

            (\romannumeral1) A path not traversing through either $h$ or $t$;

            (\romannumeral2) An AP: $\textit{AP}_{\textit{head}}(h,r,t)$ or $\textit{AP}_{\textit{tail}}(h,r,t)$;

            (\romannumeral3) A concatenation of APs: $\textit{AP}_1 \circ \textit{AP}_2 \circ ... \circ \textit{AP}_n$.
        \end{myTheoApp}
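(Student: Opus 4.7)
The plan is to argue by case analysis on how the path $p$ interacts with the distinguished vertex pair $\{h, t\}$. I let $p = e_0 \xrightarrow{r_1} e_1 \xrightarrow{r_2} \cdots \xrightarrow{r_k} e_k$ be an arbitrary path in $G'$ and define $I = \{i \in \{0,1,\ldots,k\} : e_i \in \{h,t\}\}$, the set of positions at which $p$ visits $h$ or $t$. The trichotomy will be driven by $|I|$ together with whether the extremal positions in $I$ coincide with the endpoints $0$ or $k$.

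If $I = \emptyset$, then $p$ visits neither $h$ nor $t$ and falls directly into category (i). Otherwise I list the elements of $I$ in increasing order as $i_1 < i_2 < \cdots < i_m$ and slice $p$ at each $i_j$. This produces an optional prefix $p_0$ from $e_0$ to $e_{i_1}$ when $i_1 > 0$, interior segments $p_j$ from $e_{i_j}$ to $e_{i_{j+1}}$ for $1 \le j \le m-1$, and an optional suffix $p_m$ from $e_{i_m}$ to $e_k$ when $i_m < k$. By maximality of this slicing, no interior vertex of any segment lies in $\{h,t\}$; moreover, each interior segment starts at a vertex of $\{h,t\}$, the prefix (if present) ends at such a vertex, and the suffix (if present) starts at such a vertex. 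Invoking the conversion rule stated just before the theorem, which uses inverse relations in $R^{-1}$ to rewrite any path ending at $h$ as one starting at $h$, and any path starting at $t$ as one ending at $t$, each segment can be identified with either an $\textit{AP}_{\textit{head}}(h,r,t)$ or an $\textit{AP}_{\textit{tail}}(h,r,t)$.

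Reassembling the pieces yields $p = p_0 \circ p_1 \circ \cdots \circ p_m$ (with absent prefix or suffix omitted). If exactly one non-trivial segment is produced, $p$ is a single AP and falls into category (ii); otherwise $p$ is a concatenation of two or more APs and falls into category (iii). Mutual exclusivity is then immediate: category (i) corresponds precisely to $I = \emptyset$, while within the regime $I \neq \emptyset$ the count of non-trivial segments separates (ii) from (iii).

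The step I expect to demand the most care is the conversion of a prefix or suffix via $R^{-1}$, since I must verify that after the reversal no hidden visit to $\{h,t\}$ has been overlooked which would force further decomposition. The maximality of the cut at every index in $I$ eliminates this concern, because it guarantees that the interior vertices of each segment lie outside $\{h,t\}$, so reversal produces a bona fide single AP rather than a path requiring re-slicing. Apart from this bookkeeping, the argument is a straightforward unrolling of the definitions, and no inductive machinery beyond the finite enumeration of the positions in $I$ is required.
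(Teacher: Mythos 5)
Your proposal is correct and follows essentially the same route as the paper's proof: slice the path at the index set of visits to $\{h,t\}$, note that maximal slicing makes each prefix, interior segment, and suffix an AP (using the inverse-relation conversion where needed), and classify by whether that index set is empty or how many segments result. The only difference is cosmetic — you classify case (\romannumeral2) by ``exactly one segment'' rather than the paper's $I_{ht}=\{0\}$ or $\{n\}$ condition, which if anything handles the closed-path case ($I_{ht}=\{0,n\}$) a bit more cleanly.
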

        \begin{proof}
            Given a path $p$ on $G'$, let $E_p=(e_0,e_1,...,e_n)$ denote the entity chain of $p$. For simplicity, let $I_{ht}=\{i_1,i_2,...,i_m | e_{i_m}=h$ or $e_{i_m} = t\}$ be the set of entity indexes that correspond to either $h$ or $t$ present in $E_p$. If $I_{ht}$ is an empty set ($|I_{ht}|=0$), $p$ can be classified as Type~\romannumeral1. If $I_{ht}$ only contains the first or last entity ($I_{ht}=\{0\}$ or $I_{ht}=\{n\}$), $p$ can be classified as Type~\romannumeral2.

            Otherwise, $p$ can be divided into multiple sub-paths based on the indexes in $I_{ht}$:
            1) Sub-paths of the form $\{(e_{i_a},e_{i_a+1},...,e_{i_b})|\forall i_k \in I_{ht}, i_k \leq i_a$ or $i_k \geq i_b\}$. Because $e_{i_a}=h$ or $t$, such sub-paths are APs.
            2) Sub-paths of the form $\{(e_0,e_{1},...,e_{i_b})|\forall i_k \in I_{ht}, i_k \geq i_b\}$ or $\{(e_{i_a},e_{i_a+1},...,e_n)|\forall i_k \in I_{ht}, i_k \leq i_a\}$. If $i_b=0$ or $i_a=n$, these particular sub-paths do not exist. Otherwise, because $i_b,i_a \in I_{ht}$, these sub-paths are APs.

            Therefore, in all given situations, any path on $G'$ can be represented in one of the specified categories,exclusively.
        \end{proof}

\section{Case Study}\label{sec_appendix_case_study}

    To demonstrate the APs generated by APST are logical, we conduct a case study on the FB15k-237 dataset. Table~\ref{table_case_study} presents six query triplets and their corresponding APs along with AP accuracy and AP recall.

    The triplet $(h, \textit{country/official\_language}, t)$  and its corresponding AP present a comprehensive explanation and highlight the difference between AP accuracy and AP recall. To determine the official language of a country, we can search for films and utilize those corresponding to country $h$ to identify potential languages. However, in many countries, films have various versions of languages. Therefore, the accuracy of the corresponding AP is not ensured (0.67). Nonetheless, the recall for the corresponding AP is 1.00 because there is consistently a version of film available in the official language. Thus, the positive tail entity $t$ is always one of the ending entities obtained by traversing the corresponding AP. Similar explanations can be given for other triplets and their corresponding APs.

\section{Datasets}\label{sec_appendix_dataset}
    The statistics of the datasets used in this work are shown in Table~\ref{table_dataset}.

\section{Hyper-parameter Values}\label{sec_appendix_hyper}
    The hyper-parameter values of APST  for the transductive and inductive experiments are shown in Table~\ref{table_hyper}.

\end{appendix}
\bibliography{ref.bib}

\begin{thebibliography}{21}
\providecommand{\natexlab}[1]{#1}

\bibitem[{Balažević, Allen, and Hospedales(2019)}]{Tucker}
Balažević, I.; Allen, C.; and Hospedales, T.~M. 2019.
\newblock Tucker: Tensor factorization for knowledge graph completion.
\newblock \emph{arXiv preprint arXiv:1901.09590}.

\bibitem[{Bordes et~al.(2013)Bordes, Usunier, Garcia-Duran, Weston, and
  Yakhnenko}]{TransE}
Bordes, A.; Usunier, N.; Garcia-Duran, A.; Weston, J.; and Yakhnenko, O. 2013.
\newblock Translating embeddings for modeling multi-relational data.
\newblock In \emph{Proceedings of Advances in Neural Information Processing
  Systems}, 2787–2795.

\bibitem[{Das et~al.(2017)Das, Dhuliawala, Zaheer, Vilnis, Durugkar,
  Krishnamurthy, Smola, and McCallum}]{MINERVA}
Das, R.; Dhuliawala, S.; Zaheer, M.; Vilnis, L.; Durugkar, I.; Krishnamurthy,
  A.; Smola, A.; and McCallum, A. 2017.
\newblock Go for a walk and arrive at the answer: Reasoning over paths in
  knowledge bases using reinforcement learning.
\newblock \emph{arXiv preprint arXiv:1711.05851}.

\bibitem[{Dettmers et~al.(2018)Dettmers, Minervini, Stenetorp, and
  Riedel}]{ConvE}
Dettmers, T.; Minervini, P.; Stenetorp, P.; and Riedel, S. 2018.
\newblock Convolutional 2d knowledge graph embeddings.
\newblock In \emph{Proceedings of the AAAI Conference on Artificial
  Intelligence}, 1811--1819.

\bibitem[{Devlin et~al.(2018)Devlin, Chang, Lee, and Toutanova}]{BERT}
Devlin, J.; Chang, M.-W.; Lee, K.; and Toutanova, K. 2018.
\newblock Bert: Pre-training of deep bidirectional transformers for language
  understanding.
\newblock \emph{arXiv preprint arXiv:1810.04805}.

\bibitem[{Ji et~al.(2015)Ji, He, Xu, Liu, and Zhao}]{TransD}
Ji, G.; He, S.; Xu, L.; Liu, K.; and Zhao, J. 2015.
\newblock Knowledge graph embedding via dynamic mapping matrix.
\newblock In \emph{Proceedings of the Annual Meeting of the Association for
  Computational Linguistics and the International Joint Conference on Natural
  Language Processing}, 687--696.

\bibitem[{Li et~al.(2022)Li, Shomer, Ding, Wang, Ma, Shah, Tang, and
  Yin}]{li2022graph}
Li, J.; Shomer, H.; Ding, J.; Wang, Y.; Ma, Y.; Shah, N.; Tang, J.; and Yin, D.
  2022.
\newblock Are graph neural networks really helpful for knowledge graph
  completion?
\newblock \emph{arXiv preprint arXiv:2205.10652}.

\bibitem[{Lin et~al.(2015)Lin, Liu, Sun, Liu, and Zhu}]{TransR}
Lin, Y.; Liu, Z.; Sun, M.; Liu, Y.; and Zhu, X. 2015.
\newblock Learning entity and relation embeddings for knowledge graph
  completion.
\newblock In \emph{Proceedings of AAAI Conference on Artificial Intelligence},
  2181--2187.

\bibitem[{Liu et~al.(2019)Liu, Ott, Goyal, Du, Joshi, Chen, Levy, Lewis,
  Zettlemoyer, and Stoyanov}]{Roberta}
Liu, Y.; Ott, M.; Goyal, N.; Du, J.; Joshi, M.; Chen, D.; Levy, O.; Lewis, M.;
  Zettlemoyer, L.; and Stoyanov, V. 2019.
\newblock Roberta: A robustly optimized bert pretraining approach.
\newblock \emph{arXiv preprint arXiv:1907.11692}.

\bibitem[{Mikolov et~al.(2013)Mikolov, Chen, Corrado, and Dean}]{word2vec}
Mikolov, T.; Chen, K.; Corrado, G.; and Dean, J. 2013.
\newblock Efficient estimation of word representations in vector space.
\newblock \emph{arXiv preprint arXiv:1301.3781}.

\bibitem[{Radford et~al.(2018)Radford, Narasimhan, Salimans, and
  Sutskever}]{GPT}
Radford, A.; Narasimhan, K.; Salimans, T.; and Sutskever, I. 2018.
\newblock Improving language understanding by generative pre-training.
\newblock \emph{https://openai.com/blog/language-unsupervised/}.

\bibitem[{Schlichtkrull et~al.(2018)Schlichtkrull, Kipf, Bloem, Berg, Titov,
  and Welling}]{RGCN}
Schlichtkrull, M.; Kipf, T.~N.; Bloem, P.; Berg, R. v.~d.; Titov, I.; and
  Welling, M. 2018.
\newblock Modeling relational data with graph convolutional networks.
\newblock In \emph{Proceedings of European Semantic Web Conference}, 593--607.

\bibitem[{Su et~al.(2023)Su, Wang, Miao, and Cui}]{KRST}
Su, Z.; Wang, D.; Miao, C.; and Cui, L. 2023.
\newblock Multi-Aspect Explainable Inductive Relation Prediction by Sentence
  Transformer.
\newblock In \emph{Proceedings of the AAAI Conference on Artificial
  Intelligence}, 6533--6540.

\bibitem[{Teru, Denis, and Hamilton(2020)}]{GRAIL}
Teru, K.; Denis, E.; and Hamilton, W. 2020.
\newblock Inductive relation prediction by subgraph reasoning.
\newblock In \emph{Proceedings of International Conference on Machine
  Learning}, 9448--9457.

\bibitem[{Trouillon et~al.(2017)Trouillon, Dance, Welbl, Riedel, Gaussier, and
  Bouchard}]{ComplEx}
Trouillon, T.; Dance, C.~R.; Welbl, J.; Riedel, S.; Gaussier, E.; and Bouchard,
  G. 2017.
\newblock Knowledge graph completion via complex tensor factorization.
\newblock \emph{arXiv preprint arXiv:1702.06879}.

\bibitem[{Vashishth et~al.(2019)Vashishth, Sanyal, Nitin, and
  Talukdar}]{CompGCN}
Vashishth, S.; Sanyal, S.; Nitin, V.; and Talukdar, P. 2019.
\newblock Composition-based multi-relational graph convolutional networks.
\newblock \emph{arXiv preprint arXiv:1911.03082}.

\bibitem[{Wang et~al.(2014)Wang, Zhang, Feng, and Chen}]{TransH}
Wang, Z.; Zhang, J.; Feng, J.; and Chen, Z. 2014.
\newblock Knowledge graph embedding by translating on hyperplanes.
\newblock In \emph{Proceedings of the AAAI Conference on Artificial
  Intelligence}, 1112--1119.

\bibitem[{Xiong, Hoang, and Wang(2017)}]{DeepPath}
Xiong, W.; Hoang, T.; and Wang, W.~Y. 2017.
\newblock DeepPath: A reinforcement learning method for knowledge graph
  reasoning.
\newblock \emph{arXiv preprint arXiv:1707.06690}.

\bibitem[{Yao, Mao, and Luo(2019)}]{KGBERT}
Yao, L.; Mao, C.; and Luo, Y. 2019.
\newblock KG-BERT: BERT for knowledge graph completion.
\newblock \emph{arXiv preprint arXiv:1909.03193}.

\bibitem[{Zha, Chen, and Yan(2022)}]{BERTRL}
Zha, H.; Chen, Z.; and Yan, X. 2022.
\newblock Inductive relation prediction by BERT.
\newblock In \emph{Proceedings of the AAAI Conference on Artificial
  Intelligence}, 5923--5931.

\bibitem[{Zhang et~al.(2022)Zhang, Wang, Ye, and Wu}]{zhang2022rethinking}
Zhang, Z.; Wang, J.; Ye, J.; and Wu, F. 2022.
\newblock Rethinking graph convolutional networks in knowledge graph
  completion.
\newblock In \emph{Proceedings of the ACM Web Conference 2022}, 798--807.

\end{thebibliography}
\end{document}